\documentclass[11pt]{article}

\usepackage{subcaption}
\usepackage{tikz}
\usepackage[margin=1in]{geometry}
\usepackage{hyperref}
\usepackage[numbers,sort&compress]{natbib}
\usepackage{amsmath, amssymb, amsthm}
\usepackage{booktabs} 
\hypersetup{
  colorlinks=true,
  linkcolor=blue,
  citecolor=blue,
  urlcolor=blue
}

\theoremstyle{plain}
\newtheorem{theorem}{Theorem}
\newtheorem{lemma}{Lemma}

\theoremstyle{definition}

\theoremstyle{remark}

\title{Fluid-Agent Reinforcement Learning}
\date{} 

\author{
Shishir Sharma$^{1,2}$ \quad
Doina Precup$^{1,2}$ \quad
Theodore J. Perkins$^{3}$ \\[4pt]
$^{1}$Mila – Quebec Artificial Intelligence Institute, Montreal, Canada \\
$^{2}$McGill University, Montreal, Canada \\
$^{3}$Ottawa Hospital Research Institute, University of Ottawa, Ottawa, Canada \\
\texttt{shishir.sharma@mail.mcgill.ca} \quad \texttt{doina.precup@mcgill.ca} \quad
\texttt{tperkins@ohri.ca}
}

\begin{document}
\maketitle


\begin{abstract}
The primary focus of multi‑agent reinforcement learning (MARL) has been to study interactions among a fixed number of agents embedded in an environment. However, in the real world, the number of agents is neither fixed nor known a priori. Moreover, an agent can decide to create other agents (for example, a cell may divide, or a company may spin off a division). In this paper, we propose a framework that allows agents to create other agents; we call this a fluid-agent environment. We present game-theoretic solution concepts for fluid-agent games and empirically evaluate the performance of several MARL algorithms within this framework. Our experiments include fluid variants of established benchmarks such as Predator–Prey and Level-Based Foraging, where agents can dynamically spawn, as well as a new environment we introduce that highlights how fluidity can unlock novel solution strategies beyond those observed in fixed-population settings. We demonstrate that this framework yields agent teams that adjust their size dynamically to match environmental demands.
\end{abstract}





         
\newcommand{\BibTeX}{\rm B\kern-.05em{\sc i\kern-.025em b}\kern-.08em\TeX}


\section{Introduction}

Historically, reinforcement learning (RL) \citep{Sutton}, has primarily focused on the viewpoint of a single agent embedded in some environment, acting to maximize its long-term return. Much has been achieved using this viewpoint, giving rise to algorithms that have varied applications in robotics \citep{robotics}, healthcare \citep{healthcare}, and economics \citep{economics}, among other fields. Multi-agent reinforcement learning (MARL) scenarios have also been explored, especially in the context of game-playing \citep{busoniu2008comprehensive}. World-class or even world-best players have been developed for two- or multi-player games such as Chess \citep{DBLP:journals/corr/abs-1712-01815}, GO \citep{Silver_2016} and Dota \citep{Berner2019Dota2W}. The vast majority of MARL research has focused on scenarios where there is a fixed number of agents, cooperating and/or competing with each other.

 However, in real-world circumstances, the number of agents with which one interacts can vary widely over time. Thus, while assuming that the number of agents is fixed over time leads to mathematical simplicity and allows a clean study of equilibria, sample complexity and other problems, it is also limiting. Moreover, in many real-world situations, agents take actions that {\em actively modify} the number of agents in the environment, for example by creating other agents or destroying them.  For instance, living beings procreate, thus creating new agents.  A business might acquire another business, or spin-off a subsidiary, or even decide to close down. In short, many real-world systems include a fluid collection of decision-making agents that may be in some mixture of cooperation and competition with each other, and that collection is constantly changing--in part due to the choices of the agents themselves, which may create or eliminate other agents.

In this paper, we formalize this setting by defining the notion of a \textit{fluid-agent} environment. We study a  simplified version of this setting, in which agents  are only able to create (spawn) other agents. 
We propose the Partially Observable Fluid Stochastic Games (POFSG) framework for modeling such fluid-agent games and prove the existence of Nash equilibria (NE) as well as subgame-perfect Nash equilibria (SPNE) in such games. We adapt existing popular environments, Predator-Prey  \citep{magym} and Level-Based Foraging \citep{papoudakis2021benchmarking}, to allow agent spawning and also introduce \emph{PuddleBridge}, a goal-based environment centered around dynamically leveraging fluidity to achieve optimal performance. Finally, we conduct an experimental analysis to examine several facets of spawn-only fluidity, including:
(a) the interplay between algorithmic inductive biases—joint versus per-agent optimization—and reward structure under fluidity;
(b) the ability of agents to adapt team size in response to task variability;
(c) the capacity to optimize the characteristics of spawned agents;
(d) the access to a richer policy space due to fluidity, allowing agents to switch between fluid and non-fluid strategies.

\section{Related Work}

To the best of our knowledge, no prior work has provided a formal treatment of fluid-agent environments as defined in this study. The closest practical analogue is AlphaStar \citep{vinyals2019grandmaster}, which permits unit creation and destruction, though without an explicit formalization. However, the AlphaStar game playing agent employs a single centralized policy to control all units. Thus, creating new units does not introduce additional decision-making entities. In contrast, our approach features decentralized, autonomous agents: multiple teammates act concurrently based on local observations (under CTDE training), and spawning introduces new decision-makers rather than merely expanding the action load of a single controller. 

Our framework is also conceptually related to \textit{ad-hoc teamwork} (AHT) \citep{Adhoc_Barett, Adhoc, 10.1007/978-3-031-20614-6_16}, which tackles the problem of agents coordinating with previously unseen teammates. However, there are several key differences between AHT and the setting we consider. \citet{10.1007/978-3-031-20614-6_16} characterize AHT by three assumptions: (i) no prior coordination such as through a joint training phase, (ii) no control of agents over teammates, and (iii) a collaborative objective. Our setting is not consistent with (i) and (ii). First, we permit centralized training with decentralized execution (CTDE), which means that teammates can coordinate during training. Second, agents directly determine team composition: they decide whether and when to spawn additional teammates and optimize for their capabilities. There is somewhat greater conceptual overlap between our work and open ad-hoc teamwork (OAHT) setting \citep{wang2024openadhocteamwork}, which permits team composition to evolve over time; however, newly introduced teammates are still externally provided rather than generated through strategic decisions within the system itself.

Our notion of population fluidity also relates to ideas from evolutionary game theory (EGT), where the size and composition of a population change over time as successful strategies replicate and weaker ones vanish. Classical models such as replicator dynamics \citep{taylor1978evolutionary, weibull1995evolutionary} describe this process at the population level, but agents in those frameworks do not make explicit decisions to reproduce or die—the dynamics arise automatically from payoff differences. In contrast, our formulation endogenizes this process: spawning is itself a strategic action taken by autonomous learners within the game. Thus, while EGT explains how advantageous behaviors can spread over time, our work examines how agents can intentionally control population change to improve collective performance.


\section{Background}

A MARL problem has two parts: (a) a game model defining how the agents interact and (b) a solution concept specifying what is desired or what can be achieved by the joint behavior of the agents \citep{marl-book}.  \textit{Partially Observable Stochastic Games (POSGs)} provide a natural framework for modeling multi-agent interactions with a fixed number of agents. Formally, a POSG is defined as a tuple $ G = \langle \mathcal{I}, \mathcal{S}, \mathcal{A}, \mathcal{O}, \mathcal{T}, \mathcal{Z}, \mathcal{R}, \gamma \rangle $, where $\mathcal{I} = \{1 \ldots N\}$ is a finite set of agents; $\mathcal{S}$ is the set of states; \(\mathcal{A}_i\) is the set of actions for agent \(i\);  $\mathcal{O}_i$ is the set of observations of agent $i$ ; $\mathcal{T}$ is the transition function, mapping the current state and  joint action to a probability distribution over  next states; $\mathcal{Z}_i$ is the observation function from which agent $i$ draws its observations; $\mathcal{R}_i $ is the reward function for agent \(i\);  $\gamma\in  (0,1)$ is the discount factor.

 There exists a rich literature of solution concepts proposed for various kinds of games. The most widely studied solution concept  is perhaps the Nash equilibrium (NE) \citep{doi:10.1073/pnas.36.1.48}. 

Given a joint policy $\pi = (\pi^{1}, \pi^{2}, \ldots, \pi^{N})$, the state-value function for agent $i$ is defined as the expected discounted return obtained when all agents follow $\pi$:
\begin{equation}
V^i(\pi; s)
=
\mathbb{E}_{\pi}
\left[
\sum_{t=0}^{\infty} \gamma^t r^i(s_t, \pi(s_t))
\,\middle|\, s_0 = s
\right]
\end{equation}
A joint policy $\pi$ is a Nash equilibrium if, for every state $s \in \mathcal{S}$ and every agent $i \in \mathcal{I}$,
\begin{equation}
V^i(\pi; s)
\ge
V^i(\pi^{\prime i}, \pi^{-i}; s),
\qquad
\forall \pi^{\prime i},
\end{equation}
where $\pi^{\prime i}$ denotes any alternative policy available to agent $i$.

While NE as a solution concept has broad applicability, in sequential and dynamic settings, it suffers from limitations such as failing to ensure rationality at every decision point as well as allowing non-credible threats, i.e., a strategic action that a player claims they will take but which they would have no incentive to follow through if the relevant decision point were actually reached. To account for these limitations, solution concepts such as subgame-perfect Nash equilibria (SPNE) \citep{10.1007/BF01766400} have been proposed in the game theory literature. SPNE refines NE by requiring that the players' strategies form a NE in every subgame of the original game, where a subgame is any subtree of the original game that starts at a single decision node and contains all of its successors. 

The problem of coordination among groups of agents has received considerable attention \citep{Boutilier, lauer2000algorithm, matignon2007hysteretic, Boutilier_bayesian}. Research has explored  learning communication protocols to improve coordination \citep{sukhbaatar2016learning, foerster2016learning}, as well as the emergence of natural language in such communication \citep{emergence, mordatch2018emergencegroundedcompositionallanguage}. In cases of no communication, agents have to be deployed independently and achieve coordination implicitly. 
MARL algorithms differ through the amount and type of communication that the agents are allowed among themselves. 
Algorithms belonging to the \textit{decentralized training decentralized execution} (DTDE) paradigm operate by having each agent consider all other agents as part of the environment; in this case, each agent trains independently. Despite their simplicity, DTDE algorithms such as Independent-Q learning (IQL) \citep{tan1993multi} and Proximal Policy Optimization (PPO) \citep{yu2022surprising} have been shown to perform competitively on several MARL tasks \citep{papoudakis2021benchmarking}.
One popular approach is to allow agents to access joint information in the training phase, but be completely independent during execution. These approaches fall under the category of \textit{centralized training decentralized execution} (CTDE)  \citep{Oliehoek_2008, CTDE}. Value-based CTDE methods factor the joint value function into decentralized components \citep{sunehag2017VDN, Qmix}. A specific implementation of this idea is Value Decomposition Networks (VDN) \citep{sunehag2017VDN}, in which the value function is decomposed into additive components, each of which is used by an individual agent. There also exists a class of algorithms that employ a centralized critic while maintaining decentralized actors. These methods, such as Multi-Agent PPO (MAPPO), leverage centralized information during training to improve coordination, yet execute policies independently at test time—embodying the CTDE framework.

\section{Fluid-Agent RL}
We begin by detailing a game model that allows agents to create other agents. Subsequently, we analyze existing solution concepts and their application to the proposed fluid-agent model.

\subsection{Partially Observable Fluid Stochastic Game}
To model interactions in a fluid-agent environment, we modify the POSG framework and define \textit{Partially Observable Fluid Stochastic Games} (POFSG)  $ G = \langle \mathcal{I}, \mathcal{S},\mathcal{L}, \mathcal{A}, \mathcal{O}, \mathcal{T}, \mathcal{Z}, \mathcal{R}, \gamma \rangle $, where \begin{itemize}
    \item $\mathcal{I} = \{1 \ldots N \}$ is the set of all agents that could be present in the environment at any time
    \item $\mathcal{S}$ is the set of states
    \item $\mathcal{L}: \mathcal{S} \rightarrow 2^\mathcal{I} $ is the \textit{alive} function, which returns the set of all agents present in the environment at a given state
    \item $\mathcal{A} = \prod_{i \in \mathcal{L}(s)} \mathcal{A}_i $  is the set of joint actions for all agents which are alive at state $s$; note that each agent's action space has a special ``spawning" action, which allows it to add to the number of alive agents 
    \item $\mathcal{O}_i$ is the set of observations of agent $i$
    \item $\mathcal{T}: \mathcal{S} \times \mathcal{A} \rightarrow Dist(\mathcal{S})$ is the transition function, mapping the current state and  joint action to a probability distribution over next states
    \item $\mathcal{Z}_i: \mathcal{S}  \rightarrow Dist(\mathcal{O}_i$) is the observation distribution of agent $i$
    \item $\mathcal{R}_i: \mathcal{S} \times \mathcal{A} \rightarrow \mathbb{R} $ is the reward function for agent \(i\)
    \item $\gamma\in(0,1)$ is the discount factor
\end{itemize}

We denote the population size at state $s$ by $|\mathcal{L}(s)|$. 
From the definition of $\mathcal{I}$, it follows that the maximum number of agents that could be present in a fluid environment is finite. This population ceiling is enforced when an alive agent chooses to spawn in state $s_t$, resulting in two possible scenarios. If $|\mathcal{L}(s_t)| = N$, then the action of spawning does not change the state. 
If $|\mathcal{L}(s_t)| < N$, another agent is added to the population.\footnote{In our experiments, this is done by choosing the agent with the smallest ID that is not alive at the moment and adding it to the set of alive agents.
}

We note that the POFSG definition is related to POSG, but with two important differences:
(a) the number of alive agents varies over time, which means that the joint action set varies over time as well; (b) one of the actions is to {\em spawn} another agent, which modifies {\em directly the action set}, but also modifies {\em indirectly  the set of reachable states} in the environment. Therefore, spawning can have an interesting effect on the optimal and/or equilibrium solutions of a problem. 

Even so, the Markovian nature of transitions, observations and rewards is common in our setup and POSG. The effect of the agent spawning action is also Markovian, in the sense that the new number of alive agents only depends on the previous one. Therefore, it is natural to consider applying existing POSG solution concepts and algorithms to this setting.

One important difference in this scenario compared to POSG arises if we consider adding fluidity to the fully cooperative case of POSG, in which agents share the same reward. In this case, spawning new agents changes the potential upper bound on performance that can be achieved. For example, if more agents are needed to solve a problem, a competent learning algorithm can learn to spawn more agents. However, the fully cooperative nature of the problem may change. For example, if agents spawn too many other agents in a fixed-resource environment, the resources can get depleted. Or, if the total reward is divided equally among agents, the amount per agent will vary during the game, which renders the game not fully cooperative, as more agents can lead to less reward per agent. Thus, the nature of optimal solutions and equilibria in the fluid-agent setting can change compared to the usual POSG/MARL setup.

\begin{figure*}
    \centering
    
    \begin{subfigure}[c][4cm][c]{0.25\textwidth}
        \centering
        \includegraphics[width=\linewidth,height=4cm,keepaspectratio]{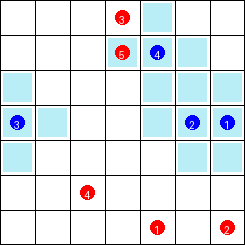}
        \caption{Predator-Prey}
        \label{predator_prey}
    \end{subfigure}
    \hspace{3em}
    \begin{subfigure}[c][4cm][c]{0.25\textwidth}
        \centering
        \includegraphics[width=\linewidth,height=4cm,keepaspectratio]{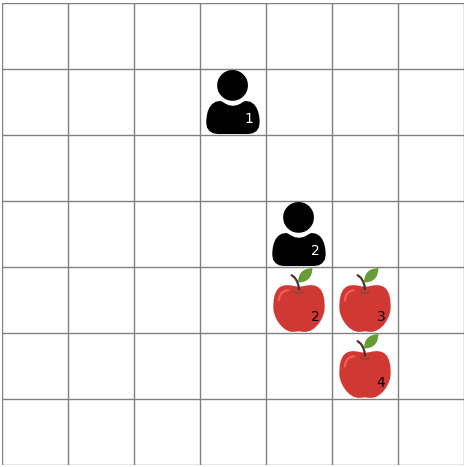}
        \caption{Level-Based Foraging}
        \label{level_based}
    \end{subfigure}
    \hspace{3em}
    \begin{subfigure}[c][4cm][c]{0.25\textwidth}
        \centering
        \includegraphics[width=\linewidth,height=4cm,keepaspectratio]{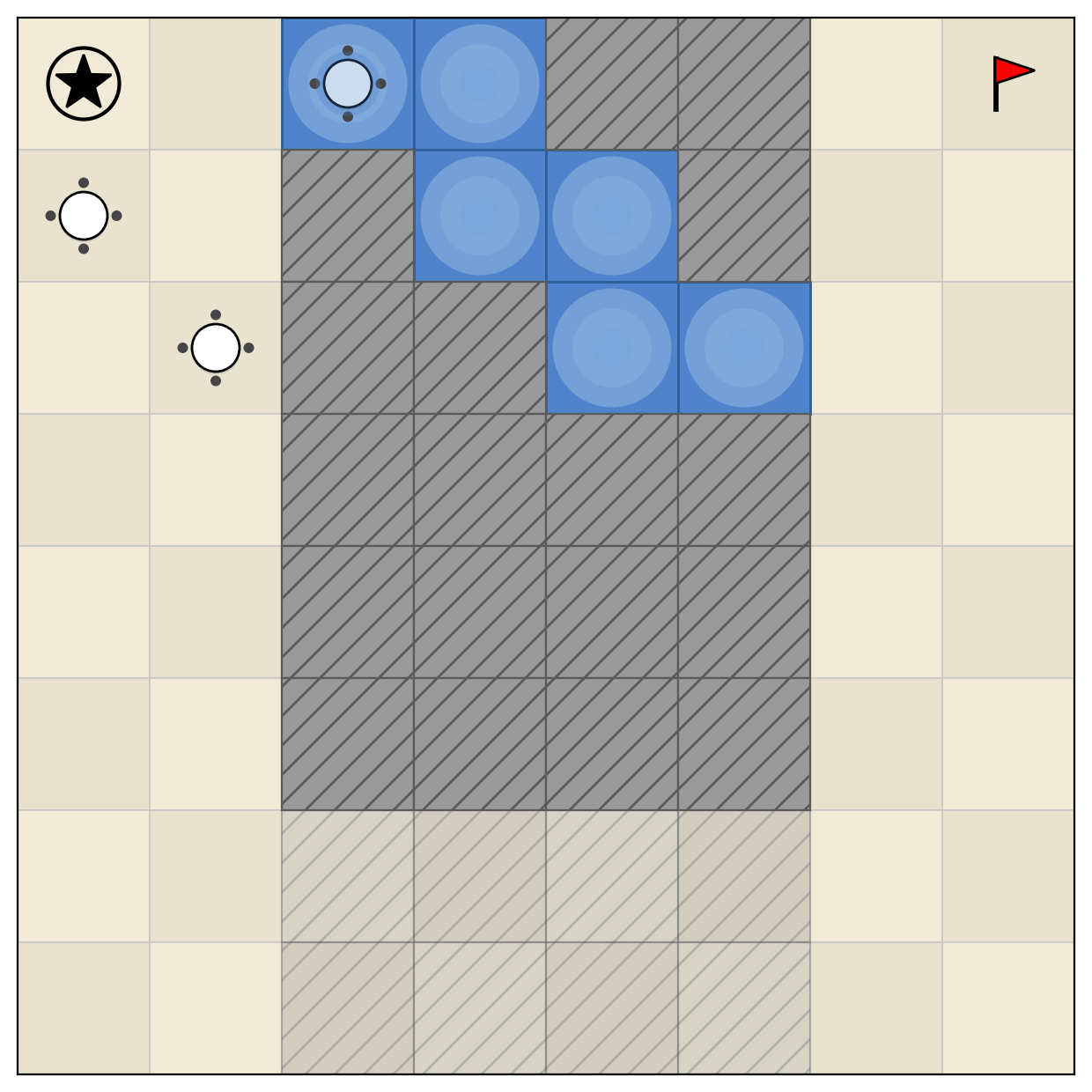}
        \caption{PuddleBridge}
        \label{puddle_bridge}
    \end{subfigure}
    \vspace{1em}
    \caption{Fluid-agent Environments}
    \label{envs}
\end{figure*}

\subsection{Nash Equilibrium}
 
Given the parallels between the proposed POFSG framework and existing POSG framework, we now discuss whether existing solution concepts for POSG can be leveraged to analyze fluid-agent games.
First, we prove the existence of a Nash equilibrium in a POFSG, by exploiting the flexibility of Fink's proof \citep{10.32917/hmj/1206139508} w.r.t allowing action spaces of agents to not only be variable, but to arbitrarily depend on the environment state. This allows us to transform a POFSG into a fixed-player stochastic game, with the unborn agents simply having a ``Do Nothing (N)" action. Thus, spawning can be viewed as an action that transitions the environment into a state with a modified action space for spawned agents.

\begin{theorem}[Existence of Stationary Nash Equilibrium]
\label{thm:ne}
Every POFSG possesses a stationary mixed–strategy Nash
equilibrium.
\end{theorem}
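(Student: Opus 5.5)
The plan is to reduce the POFSG to a standard $N$-player discounted stochastic game with state-dependent action sets, and then invoke Fink's existence theorem \citep{10.32917/hmj/1206139508}. As in the paper's own argument for this result, we assume finite $\mathcal{S}$ and finite $\mathcal{A}_i$. We also take stationary strategies to be maps from states to mixed actions; this is how the solution concept is stated in the Background section, where $V^i(\pi;s)$ conditions on $s$.

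\textbf{Step 1: construct the fixed-player game.} Let $\hat G$ have player set $\mathcal{I}=\{1,\dots,N\}$ and state set $\mathcal{S}$. Each player $i$ gets the state-dependent action set
\[
\hat{\mathcal{A}}_i(s)=\mathcal{A}_i \text{ if } i\in\mathcal{L}(s), \qquad \hat{\mathcal{A}}_i(s)=\{N\} \text{ otherwise},
\]
where $N$ is the ``Do Nothing'' action. Joint actions in $\hat G$ are tuples $\hat a\in\prod_{i\in\mathcal{I}}\hat{\mathcal{A}}_i(s)$. Since every inactive player has a singleton action set, $\hat a$ is in canonical bijection with the alive-profile $a=\hat a|_{\mathcal{L}(s)}\in\prod_{i\in\mathcal{L}(s)}\mathcal{A}_i$.

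We then define
\[
\hat{\mathcal{T}}(s,\hat a)=\mathcal{T}(s,\hat a|_{\mathcal{L}(s)}), \qquad \hat{\mathcal{R}}_i(s,\hat a)=\mathcal{R}_i(s,\hat a|_{\mathcal{L}(s)}).
\]
Spawning is already encoded in $\mathcal{T}$ as a transition to a state $s'$ with a larger alive set $\mathcal{L}(s')$, and this includes the no-op behaviour at the ceiling $|\mathcal{L}(s)|=N$. Hence spawning simply changes which action sets are non-trivial, and nothing extra has to be modelled.

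\textbf{Step 2: check Fink's hypotheses.} Fink's theorem needs the following: finitely many players, a countable (here finite) state space, finite nonempty action sets that may depend on the state, bounded rewards, and discount $\gamma<1$. Finiteness of $\mathcal{I}$ is built into the POFSG definition. Boundedness of rewards is automatic for finitely many state-action pairs.

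Fink's theorem then gives a stationary profile $\hat\pi=(\hat\pi^1,\dots,\hat\pi^N)$ with
\[
V^i(\hat\pi;s)\ge V^i(\pi'^i,\hat\pi^{-i};s)
\]
for all $s$, all $i$, and all deviations $\pi'^i$. Fink's argument runs through the Kakutani fixed point of the map sending a profile of state-wise mixed actions to the best responses in the one-shot games $r^i(s,\cdot)+\gamma\sum_{s'}\mathcal{T}(s'\mid s,\cdot)v^i(s')$. Within that argument, the optimality of stationary deviations extends to arbitrary history-dependent deviations because each player faces an MDP once the others are fixed.

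\textbf{Step 3: transfer back to the POFSG.} For an inactive player, $\hat\pi^i(s)$ is necessarily the point mass on $N$. Its restriction to alive states is therefore a valid POFSG stationary strategy, and conversely every POFSG strategy extends uniquely to $\hat G$. The correspondence is a bijection that preserves the induced process, so it preserves every value function. The Nash inequalities therefore carry over verbatim.

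\textbf{Main obstacle: partial observability.} The POFSG is partially observable, whereas Fink's theorem concerns state-feedback strategies. The plan here is to interpret ``stationary'' as state-based stationary strategies, following the paper's Nash equilibrium definition. This makes $\mathcal{Z}_i$ irrelevant to existence, so the equilibrium is stated in that sense.

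If observation-based stationary policies were required instead, Fink's fixed-point argument would not go through directly, because best responses to observation-based policies need not be stationary. In that case I would fall back on treating each stationary observation-based policy as a point in a compact convex set and applying Glicksberg's theorem to the normal-form game of stationary policies. This would yield only an equilibrium among stationary policies, so I would flag this interpretation explicitly rather than rely on it.
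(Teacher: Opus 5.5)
Your proposal is correct and follows essentially the same route as the paper: embed the POFSG into a fixed $N$-player discounted stochastic game by giving each unborn agent the singleton action set $\{\mathrm{N}\}$, apply Fink's theorem (which allows state-dependent action sets), and restrict the resulting stationary equilibrium to the alive agents in each state. There are two differences. First, you keep $\mathcal{R}_i$ as the payoff of unborn agents where the paper sets it to $0$, a convention that does not affect the existence argument. Second, you state explicitly that ``stationary'' means state-feedback strategies, which the paper's proof assumes silently when it applies Fink's fully observed result to the embedded POSG.
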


\begin{proof}
By Lemma~\ref{lem:embedding}, the game is an $N$–player discounted stochastic
game with finite players, state and action spaces.  The fixed–point argument of
Fink \citep{10.32917/hmj/1206139508} yields a stationary equilibrium in the augmented game.
Restricting the equilibrium strategies of the augmented game to the set of agents 
alive in each state yields a Nash equilibrium for the original fluid game, i.e.,
\[
\pi^{*}_{\text{fluid}}(s) = \{\, \pi^{*}_i(s) \mid i \in \mathcal{L}(s) \,\},
\]
where \( \pi^{*}_i(s) \) denotes the equilibrium mixed strategy of agent \( i \) in the  augmented game, and \( \mathcal{L}(s) \subseteq N \) is the set of agents alive in state \( s \).
\end{proof}

\subsection{Subgame--Perfect Nash Equilibrium}

\begin{theorem}[Existence of Subgame-Perfect Nash Equilibrium]
\label{thm:spne}
Every finite-horizon POFSG with publicly observed joint actions and perfect recall possesses a subgame-perfect Nash equilibrium.
\end{theorem}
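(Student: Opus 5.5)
The plan is to reduce the fluid game to a standard finite extensive-form game with perfect recall, and then obtain the subgame-perfect equilibrium by backward induction. First, I will use the same augmentation as in the proof of Theorem~\ref{thm:ne} (Lemma~\ref{lem:embedding}). Every agent in $\mathcal{I}=\{1,\dots,N\}$ is treated as a permanent player. At any state $s$, an agent $i\notin\mathcal{L}(s)$ has the single action ``Do Nothing (N)''. Spawning is then an ordinary action whose transition leads to states where the newborn agent's action set is nontrivial. This gives an $N$-player stochastic game with finite state and action sets.

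With a finite horizon $H$, I will unroll this game into a finite extensive-form game. A decision node at stage $t$ is a full history $h_t=(s_0,a_0,\dots,a_{t-1},s_t)$. Simultaneous moves are modeled in the usual way by information sets. Nature's moves implement $\mathcal{T}$ and the observation draws $\mathcal{Z}_i$. Payoffs at terminal nodes are the accumulated discounted rewards $\sum_{t<H}\gamma^t\mathcal{R}_i(s_t,a_t)$. Because all sets are finite and $H<\infty$, the tree is finite. Perfect recall is assumed in the statement.

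The key step is to pin down what a subgame is. Joint actions are publicly observed, so at the start of each stage every player knows the entire past action profile. The only residual private information is the observation stream. I will take as subgame roots the public histories $(a_0,\dots,a_{t-1})$, together with the induced belief over states; that is, I will view the game as a multistage game with observed actions in the sense of Fudenberg--Tirole. This step is where I expect the main difficulty. If states are only partially observed, a node of the tree does not by itself start a proper subgame, because its information sets straddle several nodes. So I must either argue that the public history determines a well-defined continuation game, or adopt the weaker convention that subgames begin only at nodes whose information sets are singletons. The fallback is the latter: apply Selten's existence theorem directly to the finite perfect-recall tree. That theorem guarantees a subgame-perfect equilibrium in behavior strategies, obtained via trembling-hand perfect equilibria, with perfect recall letting Kuhn's theorem translate between mixed and behavior strategies.

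For the direct construction, I will proceed by backward induction over stages $t=H-1,\dots,0$. At each subgame root of stage $t$, I fix the already-constructed equilibrium continuation play, whose continuation payoffs are well defined by finiteness. I then form the one-shot finite Bayesian game of stage $t$, in which the types are the private observations and the payoffs are the stage reward plus the discounted continuation value. Nash's (Harsanyi's) theorem gives a mixed equilibrium of this stage game. Concatenating these stage equilibria yields a behavior strategy profile. The one-shot deviation principle for finite-horizon games then shows that this profile is a Nash equilibrium in every subgame. Finally, I restrict each stage equilibrium to the agents in $\mathcal{L}(s)$, exactly as in Theorem~\ref{thm:ne}; unborn agents have only action N, so the restriction changes nothing. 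This gives a subgame-perfect equilibrium of the original POFSG.
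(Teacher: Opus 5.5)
Your proposal is correct through its fallback route, and that route differs genuinely from the paper's.

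\textbf{The paper's route.} The paper sequentializes each stage by a fixed ID ordering. It asserts that every stage-start node $(t,s_t,i_1)$ lies in a singleton information set and hence roots a proper subgame. It then runs backward induction over these subgames with Nash's theorem and passes to behavior strategies via Kuhn.

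\textbf{Your route.} You observe, correctly, that under partial observability such nodes generally do not root proper subgames. The first mover does not know $s_t$ or the other agents' private observation histories, so the paper's singleton claim requires these to be common knowledge. You therefore invoke Selten's theorem: every finite perfect-recall game has a trembling-hand perfect equilibrium in behavior strategies, and such an equilibrium is subgame perfect. The ``weaker convention'' you mention is in fact the standard definition of a proper subgame.

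\textbf{What each buys.} The paper's route is more elementary, needing only Nash and Kuhn. When states and observations are common knowledge, its stage-start nodes really are subgame roots, so it gives equilibrium play from every stage start. Your route works for arbitrary observation structures. Via Kreps--Wilson (every perfect equilibrium is the strategy part of a sequential equilibrium), it also gives sequential rationality at every information set under consistent beliefs. That is exactly what the paper asserts in its final paragraph, but subgame perfection alone does not imply it. Under genuine partial observability the proper subgames can be very few, so standard subgame perfection may add little beyond plain Nash equilibrium.

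\textbf{The direct construction.} Your ``direct construction'' does not work as written; rely on the fallback instead.
\begin{itemize}
\item At a public history $(a_0,\dots,a_{t-1})$, the relevant types are private observation \emph{histories} (by perfect recall), not current observations.
\item The joint distribution of these types with $s_t$ is the Bayes update of the prior under the stage-$<t$ strategies, because past public actions carry information about the private observations that generated them.
\item Backward induction fixes stage $t$ before those earlier strategies exist, so the stage-$t$ Bayesian game is not yet defined.
\item At public histories of probability zero, Bayes' rule gives no belief at all.
\end{itemize}
This is the usual signaling-game circularity. It is why existence of perfect Bayesian or sequential equilibria is proved through trembles rather than stage-by-stage backward induction. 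The ``multistage game with observed actions'' picture of Fudenberg--Tirole applies only when there is no private information.
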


\begin{proof}
We prove the theorem in three steps.

\noindent\textbf{Step 1: Sequentialization.}
Consider a stage \(t\) with state \(s_t\) and alive agents \(\mathcal{L}(s_t)\).
Transform the simultaneous-move stage into a sequential-move extensive form by fixing a common-knowledge ordering of agents (e.g., by increasing ID).
Let \(G^{\mathrm{seq}}\) denote the resulting extensive-form game, whose decision nodes are tuples \((t,s_t,i)\) indicating stage \(t\), state \(s_t\), and the turn of agent \(i\).
While the assumption of observed joint actions allows the information set at any decision node to contain information of all agents' moves in the previous stages, agents remain unaware of any moves made by other agents in the current stage, hence preserving the information structure of the simultaneous-move game.

\noindent\textbf{Step 2: Perfect recall and subgame definition.}
We assume agents have \emph{perfect recall}: each agent remembers its own past actions and private observations.
Together with public monitoring of joint actions, this implies that in \(G^{\mathrm{seq}}\) every agent knows both the complete public history and its own private history at each of its decision points.
Consequently, \(G^{\mathrm{seq}}\) is a finite-horizon extensive-form game with perfect recall.

A \emph{stage-start subgame} is rooted at a node \((t,s_t,i_1)\), where \(i_1\) is the first mover of stage \(t\).
Since the public history is common knowledge and private histories are known to the respective agents, this node belongs to a singleton information set.
Thus every stage-start node defines a proper subgame of \(G^{\mathrm{seq}}\).

\noindent\textbf{Step 3: Existence and lifting to the original POFSG:}
The game \(G^{\mathrm{seq}}\) is finite (finite horizon, bounded population, finite action sets).
Because $G^{\mathrm{seq}}$ is a finite extensive-form game, every proper
subgame is itself a finite extensive-form game. By Nash's theorem
\cite{540b73bd-a3f1-333e-a206-c24d0fbbb8bc}, every finite game admits a mixed-strategy Nash
equilibrium. Applying this result to each subgame and proceeding by
backward induction over the finite horizon yields a mixed-strategy
subgame-perfect Nash equilibrium of $G^{\mathrm{seq}}$. As \(G^{\mathrm{seq}}\) has perfect recall, Kuhn's theorem \citep{Kuhn+1953+193+216} guarantees the existence of an \emph{outcome-equivalent} behavioral-strategy SPNE.

Finally, we map this equilibrium back to the original POFSG.
In \(G^{\mathrm{seq}}\), an agent's strategy depends only on information that is available in the original POFSG by construction.
Hence the behavioral strategy of \(G^{\mathrm{seq}}\) induces a well-defined policy for the simultaneous-move POFSG.
Because the equilibrium is subgame-perfect in $G^{\mathrm{seq}}$, it is sequentially rational at every information set of the game. The induced behavioral strategy profile therefore constitutes a subgame-perfect Nash equilibrium of the original POFSG.

\end{proof}

\section{Fluid-agent Environments}
\subsection{Fluid Predator-Prey}
We work with a predator-prey class of environments \citep{magym}, an example of which is shown in Fig.~\ref{predator_prey}. 

\subsubsection{Dynamics} The environment consists of a square grid, with predators depicted as blue circles and preys as red. Blue squares in the cardinal direction of each predator denote their range for capturing prey. However, a prey must be in range of two different predators in order to be captured; thus, the predators must cooperate. At any time, a square can only be occupied by a single entity. Movement by either a predator or a prey is therefore only possible if the target square is empty. The preys follow a fixed policy of either staying at the same position, with probability 0.3, or randomly moving in any direction with probability 0.175, unless the randomly selected choice would move it into range of a predator. In this case, the prey simply stays still. Once the team of agents takes an action, the target position for each agent, starting with the agent with ID 1, is checked and if empty, the agent is moved. After all agents are at their new positions, any preys now within range of at least 2 agents are captured and removed from the grid. Finally, the preys that are still alive are moved to their new positions. An episode ends when either all the preys are captured or when 100 time steps have elapsed.

\subsubsection{Observation Space}
We restrict the visual field of each agent to an $11 \times 11$ window centered around it, as in previous works \citep{tan1993multi}. The agents observe their own coordinates as well as their ID, the latter being necessary for performing parameter sharing \citep{Gupta2017CooperativeMC}. While these three components might suffice for solving the Predator-Prey environment, adding spawning dynamics to the environment makes it much harder to solve. Consequently, in order for the agents to learn to spawn, we augment the observation space of each agent with the team's joint action taken in the previous step, a \emph{parent indicator} i.e. the number of children the agent itself has spawned, and the number of preys and agents present in the environment. The motivation behind conditioning the agents on the previous joint action and the number of agents spawned is to keep the credit assignment problem tractable. The information about the number of preys and agents present is added as it is crucial for fluid-agent groups to accurately judge whether spawning is useful. 

\subsubsection{Action Space}

Each agent's action space consists of actions \texttt{move[each direction]}, \texttt{do nothing}, and \texttt{spawn}. After a spawning action, new agents are spawned randomly at an empty  position. There are no restrictions on how many agents can be spawned in one step, but the total number of agents is bounded.

\subsubsection{Reward Function}
We retain a common-reward structure i.e. all alive predators receive the same per-agent reward. The design choice is whether to normalize the payoff for capturing prey by the current population size. Let $P$ be the prey\_capture\_reward, $c_{\mathrm{spawn}}$ the spawn penalty, $c_{\mathrm{step}}$ the per-step cost per alive agent, $N_{\mathrm{cap}}(s,a)\in\mathbb{N}_0$ the number of captures on $(s,a)$, and $N_{\mathrm{sp}}(s,a)\in\mathbb{N}_0$ the number of successful spawns. Each alive agent $i\in \mathcal{L}(s)$ receives
\[
r_i(s,a)\;=\;\underbrace{\mathrm{Payoff}(s,a;\mathcal{L}(s))}_{\text{predation reward}}
\;-\;\underbrace{\Big(\tfrac{c_{\mathrm{spawn}}\,N_{\mathrm{sp}}(s,a)}{|\mathcal{L}(s)|}+c_{\mathrm{step}}\Big)}_{\text{costs (spawn split per capita; step per agent)}},
\]
where $\mathcal{L}(s)$ is measured pre-transition.

\textit{Size-constant payoff (SCP)} does not change with population size:
\[
\mathrm{Payoff}(s,a;\mathcal{L}(s))=P\,N_{\mathrm{cap}}(s,a)
\]
Here, the per-agent capture reward is invariant to $|\mathcal{L}(s)|$, while the joint capture reward scales linearly with the number of agents.

\textit{Size-inverse payoff (SIP)}
performs a normalization per capita:
\[
\mathrm{Payoff}(s,a;\mathcal{L}(s))=\frac{P\,N_{\mathrm{cap}}(s,a)}{|\mathcal{L}(s)|}
\]
Here, the \emph{per-agent} capture payoff decreases with $|\mathcal{L}(s)|$, while the \emph{joint} capture payoff remains invariant (summing identical per-agent rewards over $|\mathcal{L}(s)|$ recovers $P\,N_{\mathrm{cap}}$). Spawn cost is per-capita in both variants; $c_{\mathrm{step}}$ is a per-alive linear pressure in both.

\subsection{Fluid Level-Based Foraging}
We adapt the Level-Based Foraging (LBF) environment \citep{albrecht2015gametheoreticmodelbestresponselearning}, illustrated in Fig.\ref{level_based}, to a fluid paradigm. 

\subsubsection{Dynamics} The world is a square grid populated by food items (red apples labeled with their level) and agents (black silhouettes labeled with their levels).  Agents can only attempt to collect food in the four neighboring cells. Any collection attempt of a level $\ell$ food is only successful if the sum of levels of surrounding agents is $\geq \ell$. 

\subsubsection{Observation Space}

Unlike Predator-Prey, here we consider the fully observed setting. Each agent receives an observation, encoding the positions and levels of all food items 
\((x, y, \ell)\), and for each agent, the positions, levels, alive flags, parent indicators, and previous actions 
\((x, y, \ell, \text{alive}, \text{parent}, a_{\text{prev}})\).
Two additional scalars record the environment’s population capacity and the agent’s own ID. 
This yields an observation of dimension, $
\text{obs\_dim} = 3N_{\text{food}} + 6N_{\text{agents}} + 2,
 $ allowing agents to reason jointly over food distribution, team composition, and population constraints 
when deciding whether to spawn or act cooperatively.

\subsubsection{Action Space}
Originally, the action space of each agent consisted of actions \texttt{do nothing}, \texttt{move[each direction]} and \texttt{load}. In the fluid version, we also include a \texttt{spawn} action that lets any alive agent spawn another agent at a random empty location in the grid. The spawned agent inherits the level of its parent agent. Every square can hold at most one entity, and movement (N, S, E, W) succeeds only when the destination cell is empty. As with our Predator-Prey variant, entity updates are executed sequentially in order of increasing agent ID, such that collision resolution is deterministic.

\subsubsection{Reward Function}
Loading a food of level $\ell$ results in each agent receiving a share of $\ell$ weighted by their level. As in Predator-Prey,  the spawn cost $c_s$ is shared among all alive agents. However, agents that do not participate in collecting a food do not receive any reward. Episodes terminate when all foods are consumed or after 100 steps. A step cost $c_{\text{step}}$ applies to all alive agents.

\subsection{PuddleBridge}

We introduce the \emph{PuddleBridge} environment (Fig.~\ref{puddle_bridge}), an $8\times8$ grid world featuring land (checkered blocks), walls (grey blocks with diagonal lines), puddles (blue blocks with circles), a designated spawn cell (represented by $\star $), and a goal cell (represented by \tikz[baseline=0.1ex]{\draw[thick] (0,0) -- (0,0.3);\fill[red] (0,0.3)--(0.25,0.2)--(0,0.1)--cycle;}). Agents (represented by white circles) begin in the spawn cell and must coordinate to reach the goal  on the opposite side of a central wall barrier.  The land, spawn cell and goal are single-occupancy; Walls are impassable. 
  
\subsubsection{Puddle Dynamics}
\begin{itemize}
\item \textbf{Capacity.} Entering a puddle is permitted iff its current occupancy is $<2$. 
\item \textbf{Stacking.} If an agent moves into a puddle cell already holding one agent, given that the already present agent stays in the same position, the recently moved agent becomes the \emph{top}; the cell is then a 2-stack.
\item \textbf{Bottom lock.} While a cell is a 2-stack, the bottom agent is immobilized for that entire step and can only spawn; the top agent may move and spawn.\footnote{This is evaluated from the pre-move state each step; once a cell is a 2-stack at step start, the bottom remains locked for that step even if the top leaves earlier in the move ordering.}
\item \textbf{Puddle $\to$ puddle mobility.} A puddle $\to$ puddle move is permitted \emph{only} for the \emph{top} agent of a 2-stack. In that case, the move is treated like a land $\to$ puddle move: any destination puddle with free capacity is admissible (it need not already be occupied). A lone occupant of a puddle cell cannot perform a puddle $\to$ puddle move.
\item \textbf{Unstacking.} When a 2-stack loses its top (the only one allowed to leave), the cell reverts to single occupancy.
\end{itemize}

The lightly shaded blocks in Fig. \ref{puddle_bridge} represent walls that may or may not be present in each episode, simulating a gating mechanism. When this group of walls is absent, i.e. the gate is open, the agent can reach the goal state by itself, by going across the central wall barrier. If the walls are present, i.e. the gate is closed, this route is blocked. There is an alternative route through the puddles, but it requires coordination with another agent 
to move from puddle $\to$ puddle. 
The agent stacking mechanic enables a physical ``bridge'' over the puddle, where the top agent can move onward while the bottom agent remains submerged, effectively turning local cooperation into a spatial resource.

\subsubsection{Observation Space}
For every grid cell, each agent's observation includes a one-hot encoding of the tile type and a scalar ID feature representing the base occupant (bottom agent if stacked). Each puddle cell contributes an additional virtual slot recording the top agent’s ID. Agent-specific information is appended at the end: the agent’s normalized position $(r,c)$, its own ID, and a one-hot vector of previous actions for every agent in the population.

\subsubsection{Action Space}
Each agent selects one of six discrete actions: \texttt{none}, \texttt{north}, \texttt{south}, \texttt{west}, \texttt{east}, or \texttt{spawn}. The \texttt{spawn} action attempts to create a new agent at the spawn cell if it is unoccupied and the episode’s population cap is not yet reached. Agents act in ascending order of IDs to ensure deterministic collision resolution.

\subsubsection{Reward Function}
Reaching the goal yields a team reward of $10$, shared equally among all currently alive agents. Each alive agent pays a per-step cost $c_{\text{step}}$, and successful spawns incur a team spawn cost $c_{\text{spawn}}$, equally divided among agents alive at the start of the step. Episodes terminate when any agent reaches the goal or after $100$ steps.

\subsection{Exploration Paradigm}
Introducing fluidity exacerbates the challenge of exploration. Beyond exploring the environment itself, agents must now also explore how learned behaviors interact with different population sizes, as these alter the environment’s dynamics. Naïve exploration strategies often become trapped in suboptimal extremes of population size. To address this, during training, we randomly sample both the initial population and a population ceiling (less than or equal to the true ceiling) at the start of each episode. This forces agents to experience and adapt to a wider range of population sizes that would otherwise be underexplored. For all fluid-agent environments discussed thus far, we additionally include this sampled ceiling in the agents’ observation space. During evaluation, the sampled ceiling is replaced with the true population limit. While randomizing the initial population is straightforward in Predator–Prey, owing to the complete homogeneity among agents, the process is more involved in Level-Based Foraging (LBF) and PuddleBridge. In LBF, we handle this by sampling both population size and agent levels randomly at initialization, constraining the sampled levels to the maximum level among the agents initially present in the episode. In PuddleBridge, where spatial relationships are critical, any agents beyond the first are placed only in cells adjacent to the spawn cell, ensuring 
consistent starting configurations.

Moreover, for algorithms employing simple exploration schemes such as $\epsilon$-greedy (e.g., IQL and VDN), we introduce a separate exploration parameter $\epsilon_{spawn}$ for spawning, which increases linearly throughout training. When randomly choosing an action, we choose \texttt{spawn} with probability $\epsilon_{spawn}$ and other actions with probability $(1 - \epsilon_{spawn})/(|\mathcal{A}| - 1)$. This approach, inspired by curriculum learning, allows agents to first learn effective behaviors at smaller population sizes before progressively adapting their policies to settings with larger and more variable populations.

\section{Empirical Analysis}

\subsection{Experimental Design}

This section outlines the design and rationale behind our experimental setup. 
We design environments and evaluation protocols that allow us to probe specific questions about how fluidity interacts with learning dynamics, coordination structure, and policy optimality.

\subsubsection{Algorithmic Inductive bias vs. Fluid reward}

Rather than adopting the conventional classification of algorithms into CTDE and DTDE, we instead categorize them by their optimization target i.e. whether they optimize the per-agent or joint return. This distinction is particularly salient under fluidity, since unlike in static settings where the two objectives coincide for cooperative agents, fluid-agent environments introduce an additional means of improving the joint return: adapting the team size itself. This distinction is illustrated by the case of a size-constant payoff: agents may achieve an optimal per-agent return yet fail to maximize the joint return if they neglect to spawn additional agents whose inclusion would yield a net positive contribution to the joint return. To examine how an algorithm’s optimization target interacts with the reward structure under fluidity, we evaluate performance under two distinct payoff formulations. With a size-constant payoff, algorithms optimizing the joint return are incentivized to spawn additional agents as a means of increasing total reward, whereas per-agent optimizers lack such incentive unless the spawned agents contribute a net positive return to their parent. The dynamics become more balanced under a size-inverse payoff, where spawning provides no direct advantage in joint return. In this case, the only incentive to spawn is to increase the average per-agent return experienced by the alive agents. We conduct experiments using IQL, VDN, PPO, and MAPPO, evaluating the latter under two critic configurations: one with concatenated observations as input and another using the global state (the full predator–prey grid), which we denote as MAPPO\_state.

\subsubsection{Task Variability}

A key advantage expected of fluid agents over a fixed group is their ability to dynamically adapt to task variability. To test this hypothesis, we conduct an ablation study in the Predator–Prey environment, where the number of preys is randomly sampled from a multinomial distribution at the start of each episode. With a substantial spawn cost, a fluid group should learn to limit spawning when preys are scarce and expand when they are abundant. Since fixed groups incur no spawn cost, we subtract these costs post hoc for fairness. Although this adjustment introduces different learning dynamics, the fluid group nonetheless faces a significantly harder problem—it must learn not only to capture preys efficiently but also to optimize its own population size.

\subsubsection{Adaptive Team Composition}
We next examine whether agents can learn to optimize team composition, deciding not only whether to spawn but which type of agent to add. Because all spawned agents in Predator–Prey are indistinguishable, we turn to LBF, where we introduce an inherited parent level rule—each child inherits the level of its parent. We then construct a scenario with two initial agents of levels \{1,2\}, food items of levels \{2,3,4,5\}, and a maximum of four agents. With a fixed spawning cost per agent, the optimal policy is to spawn exactly one additional agent, specifically of level 2, since this configuration alone enables the team to collect all four foods while paying the smallest total spawn cost. We set the spawn and step costs to $c_{\text{spawn}} = 1.0$ and $c_{\text{step}} = 0.025$.

\subsubsection{Emergence of Richer Policies through Fluidity}
Up to this point, our analysis has assumed that spawned agents are policy clones of their parent, reflecting parameter sharing among all agents. We now relax this assumption and consider the case in which agents possess distinct policies. This raises several natural questions. Can an agent learn not only when to spawn, but also how to coordinate effectively with a heterogeneous teammate? Can it dynamically alternate between fluid and fixed strategies depending on context? More broadly, does fluidity lead to qualitatively different behaviors compared to fixed-agent settings?
To investigate these questions, we conduct experiments in the PuddleBridge environment, simulating agent independence by disabling parameter sharing and analyzing the resulting behavior of VDN algorithm. The spawn and step costs are fixed at $c_{\text{spawn}} = 1.0$ and $c_{\text{step}} = 0.1$.

\subsection{Evaluation}
For all algorithmic comparisons, we report the normalized, undiscounted joint return unless specified otherwise. This is done by computing the joint return over 1000 test episodes evaluated at 100 uniformly spaced checkpoints during training and normalizing using the following formula: $R_{\text{norm}} = (R - R_{\min})/(R_{\max} - R_{\min})$
where \( R \) denotes the observed joint return, and \( R_{\min} \) and \( R_{\max} \) are the minimum and maximum returns across all algorithms and timesteps, respectively. All results are averaged over 5 seeds, with shaded area denoting one standard deviation. 

\subsection{Results}
\begin{figure*}[h]
\centering
     \includegraphics[width=0.8\linewidth]{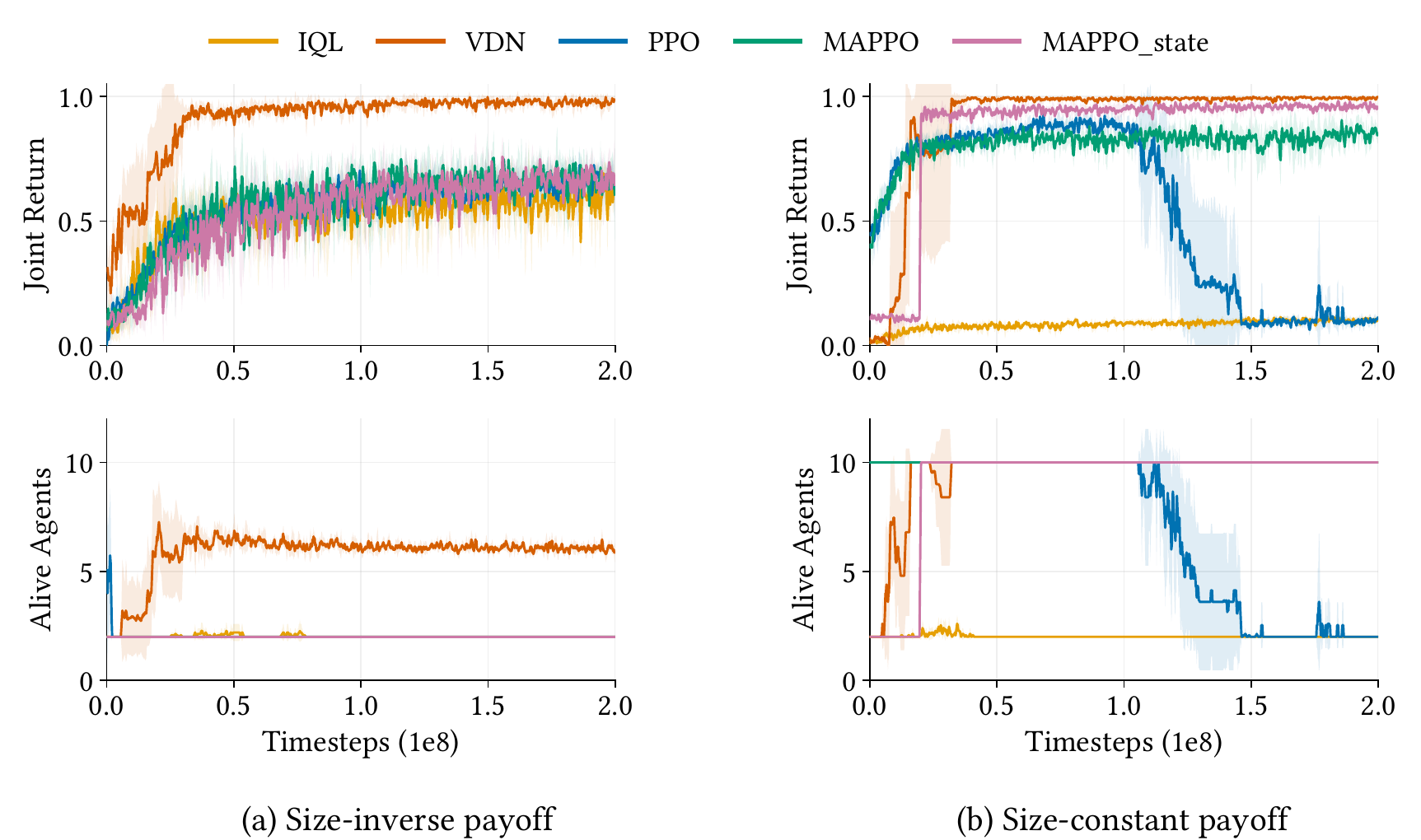}            
    \caption{Algorithmic comparison on Predator-Prey environment under different payoff normalizations.} 
    \label{RewScale}
\end{figure*}

\subsubsection{Algorithmic Comparison}

To compare the nature of different MARL algorithms in the fluid-agent setting, we run experiments with a $21\times21$ grid with 42 preys, prey\_capture\_reward $P=5$ and $c_{\text{step}}=0.01$. For all experiments on Predator-Prey, the fluid group of agents begins with 2 agents and is restricted to a maximum population of 10 agents.\begin{itemize}
    \item \textbf{SIP:} We first consider the size-inverse payoff, with $c_{\text{spawn}}=10$. Fig.~\ref{RewScale}(a) displays the normalized joint return and the agents alive at the end of each test episode. As expected, under a size-inverse payoff, per-agent optimization algorithms are discouraged from spawning, since each additional agent sharply reduces the per-agent return—even when doing so could improve the joint return. In contrast, value decomposition methods such as VDN, which optimize directly for the joint return, perform well in this setting. Notably, VDN does not spawn excessively; instead, it effectively balances the cost–benefit trade-off of adding agents and converges to spawning approximately six agents.
    
    \item \textbf{SCP:} For the size-constant payoff, we counterbalance the clear incentive to spawn—since any agent capturing a prey yields a shared reward P for all—by setting a relatively high spawn cost of $c_{\text{spawn}}=50$. Fig.~\ref{RewScale}(b) displays the normalized joint return and the agents alive at the end of each test episode. Despite this penalty, the optimal per-agent return still corresponds to spawning the maximum number of agents. Empirically, we observe that IQL and PPO fail to reach this optimum, whereas CTDE-based methods like VDN and MAPPO achieve it with ease.

\end{itemize}
\clearpage

\subsubsection{Ablation}
To perform this fluidity ablation, we use a \(25 \times 25\) grid and evaluate fixed groups of 
\(2, 4, 6, 8,\) and \(10\) agents alongside a fluid group, all trained using VDN. We treat \(2\) agents as the 
baseline population, meaning all runs begin with two agents, and fluid agents incur a spawn 
cost only when the population exceeds this baseline. For fairness, the spawn cost 
 \(c_{\text{spawn}} \times (n - 2)\), is applied \textit{post hoc} to each fixed group of $n$ agents. We maintain the prey\_capture\_reward $P=5$, $c_{\text{step}}=0.01$, $c_{\text{spawn}}=10$ and employ the size-inverse payoff for fluid agents. Preys are sampled from a multinomial distribution of \{20,40,60,80\} each episode. For each prey population, we measure the joint return over 1000 post-convergence episodes for all agent groups. 
\begin{figure*}[t]
\centering
\captionsetup[subfigure]{justification=centering,singlelinecheck=false}

\newlength{\figH}
\setlength{\figH}{6.2cm}
\captionsetup[subfigure]{skip=4pt}
\begin{subfigure}[t]{0.58\textwidth}
  \centering
  \hspace*{-22mm}%
  \begin{minipage}{\linewidth}
    \centering
    \includegraphics[height=\figH]{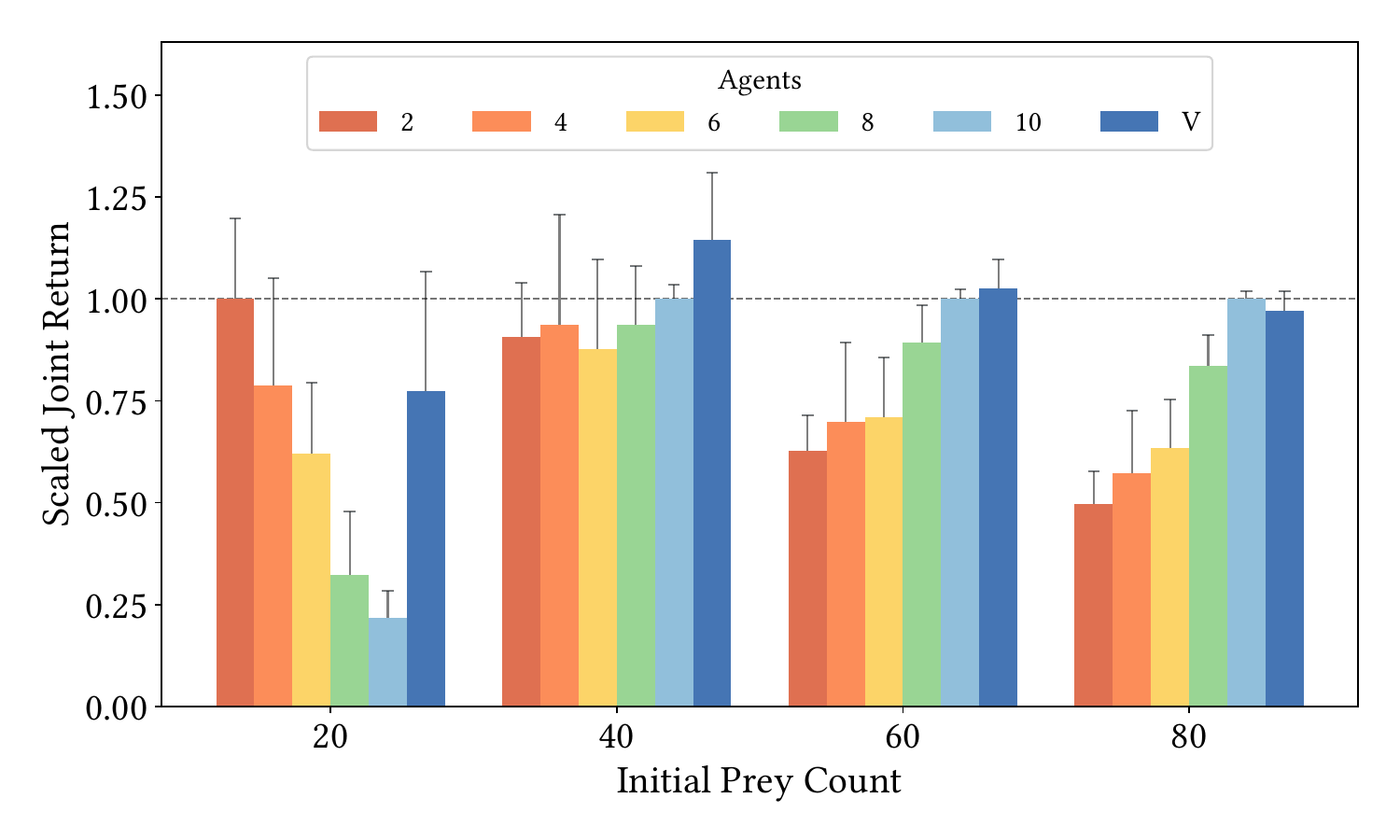}
    \subcaption*{\hspace{4.5em}(a)}    
  \end{minipage}
\end{subfigure}\hfill
\begin{subfigure}[t]{0.35\textwidth}
  \centering
  \hspace*{-18mm}%
  \begin{minipage}{\linewidth}
    \centering
    \includegraphics[height=\figH]{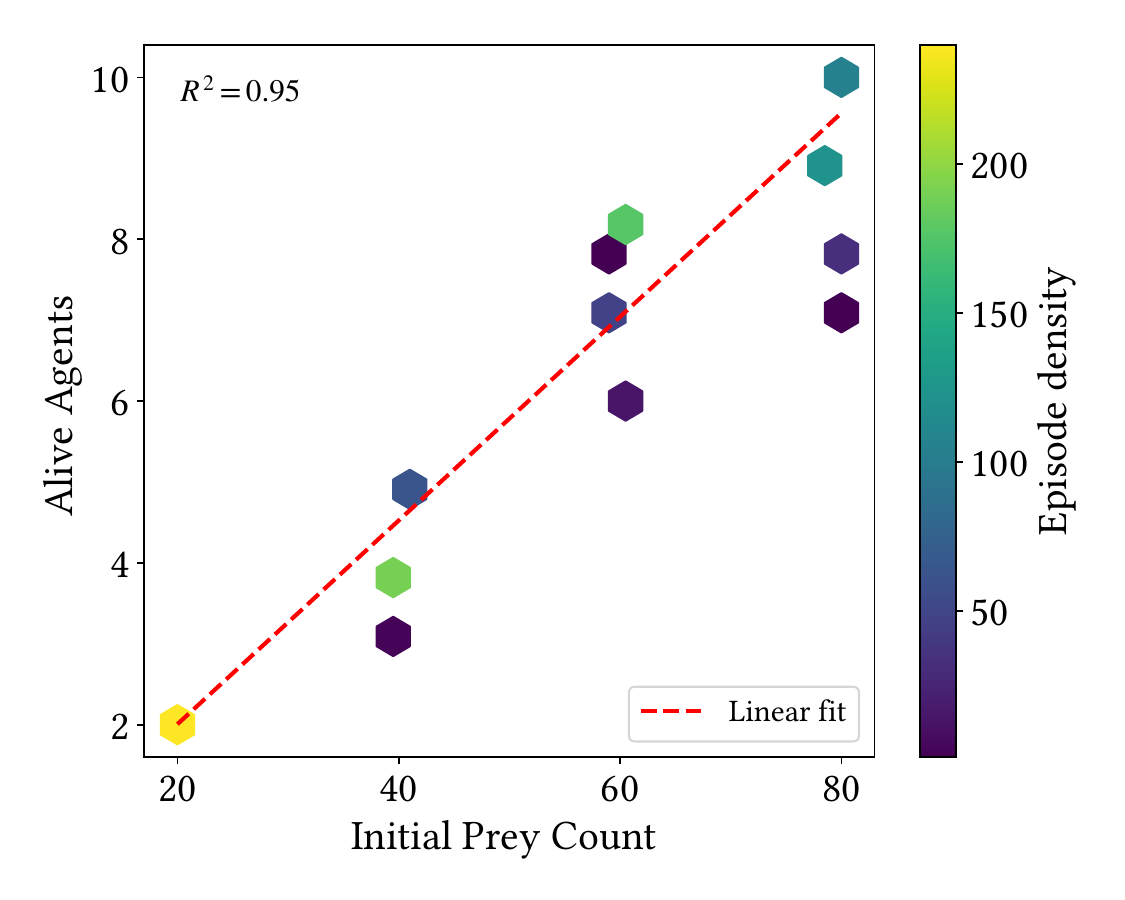}
    \subcaption*{\hspace{2.8em}(b)}    
  \end{minipage}
\end{subfigure}
\caption{(a) Fluid and fixed groups on tasks with preys sampled from a distribution. Error bars represent one standard deviation. (b) Density plot showing the relationship between episodic prey availability and converged fluid-agent populations.}
\label{fig:task_abl}
\end{figure*}

Fig.~\ref{fig:task_abl}(a) compares the joint returns achieved by fixed-size predator groups (2–10 agents) and a fluid population (“V”) across varying levels of prey availability. Returns are scaled by the maximum fixed-group performance to highlight relative efficiency. At low prey counts, small fixed groups (2–4 agents) achieve near-optimal returns due to minimal spawn and step costs. However, as prey abundance increases, their performance saturates, failing to capitalize on the additional resources. Larger fixed groups show the opposite trend: they underperform in low-resource settings due to higher per-step and spawn costs, but improve with prey richness. In contrast, the fluid group consistently adapts its population size to the resource level and performs competitively with the best fixed group across prey densities. Fig.~\ref{fig:task_abl}(b) depicts the relationship between initial prey availability and the population size adopted by the agents. Each hexagon represents the density of episodes, while the red dashed line traces the average equilibrium population attained across varying prey counts. This demonstrates that agents collectively learn to regulate their population size in response to environmental abundance—expanding when resources are plentiful and refraining from spawning when scarce.

\clearpage
\subsubsection{Level-Based Foraging}
Figure~\ref{fig:LBF_results} shows the normalized joint return and the number of level-1 and level-2 agents spawned at the end of each episode for different MARL algorithms. In the optimal configuration, agents should spawn exactly one additional agent, with the level-2 agent initiating the spawn so that the new teammate also has level 2. This setup enables capturing level-5 food with only three agents while minimizing spawn cost. From the plot of level-2 agents spawned, it is evident that PPO, MAPPO and VDN all learn the optimal strategy—spawning exactly one additional level-2 agent—while IQL approaches similar behavior with slightly higher variance. This confirms that fluid agents can optimize not only population size, but also the type of agent spawned, enabling precise and role-aware spawning strategies.

\begin{figure*}[t]    
     \centering
     \begin{subfigure}[c]{1\textwidth}                
     \includegraphics[width=\textwidth]{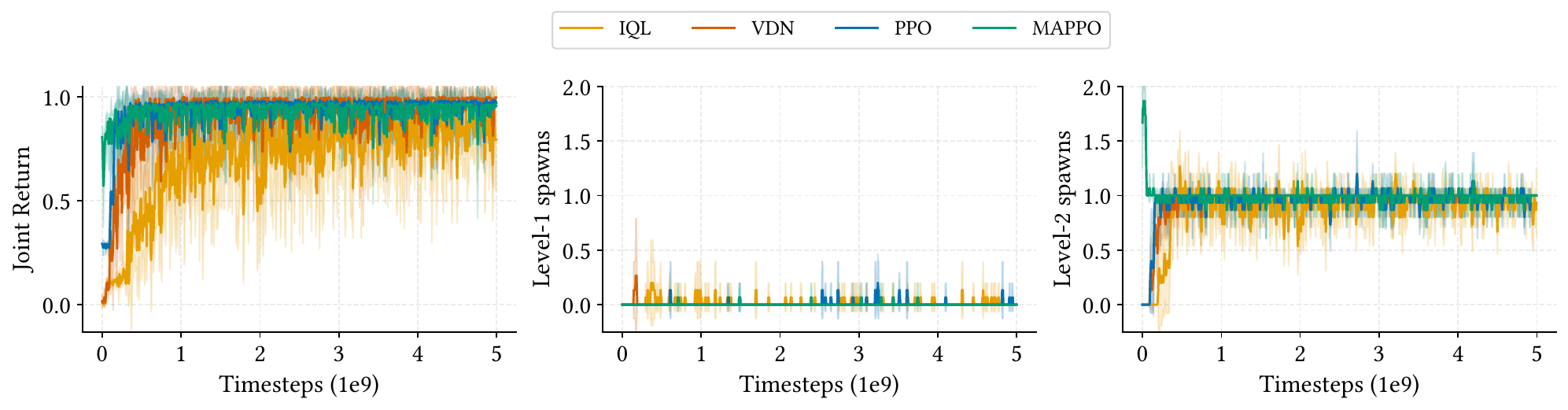}         
     \end{subfigure}                       
    \caption{Algorithmic comparison on the LBF environment.}
    \label{fig:LBF_results}
\end{figure*}

\vspace{2em}
\begin{figure*}[h]         
     \begin{subfigure}[c]{1\textwidth}         
     \centering
     \includegraphics[width=0.5
     \textwidth]{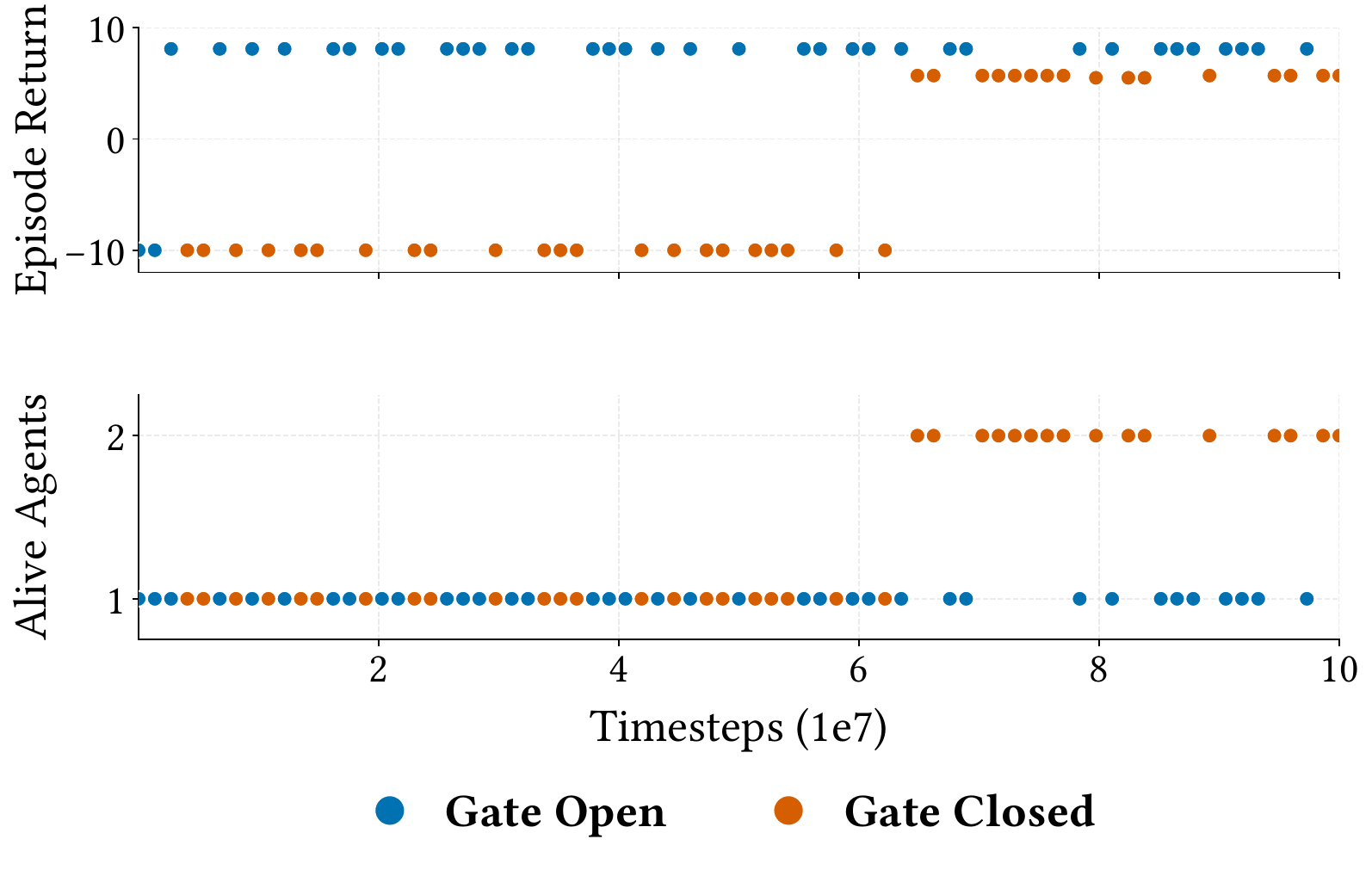}              
     \end{subfigure}               
        \caption{Episode return (top) and alive agents (bottom) during training on PuddleBridge. Points are colored by gate condition: open (blue) or closed (orange). }
        \label{PuddleBridgeResults}
\end{figure*}
\subsubsection{PuddleBridge}
    Figure~\ref{PuddleBridgeResults} presents 75 equidistantly sampled training checkpoints, reporting the corresponding unnormalized joint episode return (top) and the number of agents alive at the end of each episode (bottom), with the color of each data point indicating whether the gate was open (blue) or closed (orange). Because the policy for reaching the goal when the gate is open is straightforward, the agent quickly learns it, as reflected by the blue points corresponding to single-agent episodes with positive returns. In contrast, the closed-gate scenario requires both spawning and coordination, making it substantially harder to master. Nonetheless, the agents eventually learn this more complex policy. A by-product of spawning in this setting is that, while two agents are required to achieve a positive return when the gate is closed, the accumulated spawn and step costs make this policy suboptimal when the gate is open. The agents successfully learn this distinction—highlighted by the absence of blue points around the two-agent mark, adapting by switching between fluid and non-fluid policies to achieve optimal performance across both conditions.

\section{Conclusion and Future Work}
We propose Fluid-Agent Reinforcement Learning as a framework for modeling real-world interactions where the assumption of a fixed number of agents is overly restrictive. In this setting, a team of agents can dynamically expand its population through a spawning mechanism integrated into the environment. We evaluate the performance of different MARL algorithms under various reward structures and show that the fluid-agent setting enables teams to dynamically adjust their population size in response to task variability, optimize the characteristics of spawned agents and also discover richer policies that are inaccessible in fixed-agent settings.

This work primarily introduces the fluid-agent framework and focuses on the aspect of agent creation. Future directions include studying the more general case where environments support agent creation and death, as well as investigating the theoretical properties of fluid systems—such as conditions for equilibrium convergence and sample complexity under fluid dynamics.


\bibliographystyle{plainnat}
\bibliography{main}

\clearpage
\appendix

\section{Additional Theoretical Results}

\begin{lemma}[Fixed–Population Embedding]
\label{lem:embedding}
Consider a POFSG  $ G = \langle \mathcal{I}, \mathcal{S},\mathcal{L}, \mathcal{A}, \mathcal{O}, \mathcal{T}, \mathcal{Z}, \mathcal{R}, \gamma \rangle $. For every agent $i$ in $G$, we define an \emph{augmented} action space $\mathcal{A}^{+}_i(s)$ dependent on the state $s$:
\begin{equation}
\mathcal{A}^{+}_i(s)
  \;=\;
  \begin{cases}
    \mathcal{A}_i, & i\in \mathcal{L}(s),\\[2pt]
    \{\mathrm{N}\}, & i\notin \mathcal{L}(s).
  \end{cases}  \nonumber
\end{equation}

where $N$ denotes a dummy “do nothing” action.  This allows us to define the joint augmented action space as $ \mathcal{A}^{+}(s)=\prod_{i=1}^{N}\mathcal{A}^{+}_i(s)$. Next, define the \emph{augmented} transition kernel $\mathcal{T}^+: \mathcal{S} \times \mathcal{A}^{+} \rightarrow Dist(S)$:
\begin{equation}
    \mathcal{T}^{+}(s_{t+1}|s_t, \{a_i: i \in \mathcal{I}\}) = \mathcal{T}(s_{t+1}|s_t, \{a_i: i \in \mathcal{L}(s_t)\}) \nonumber 
\end{equation}
Finally, define the \emph{augmented} reward function $\mathcal{R}_i^+ : \mathcal{S} \times \mathcal{A}^{+} \rightarrow \mathbb{R}$ for each agent in the POFSG. 
\begin{equation}
\mathcal{R}_i^+(s, a)
  \;=\;
  \begin{cases}
    \mathcal{R}_i(s, a), & i\in \mathcal{L}(s),\\[2pt]
    0, & i\notin \mathcal{L}(s).
  \end{cases}  \nonumber
\end{equation}

Thus the POFSG $G$ can be embedded in a POSG $ G' = \langle \mathcal{I}, \mathcal{S}, \mathcal{A}^+, \mathcal{O}, \mathcal{T}^+, \mathcal{Z}, \mathcal{R}^+, \gamma \rangle $, inducing an $N$–player discounted stochastic game.
\end{lemma}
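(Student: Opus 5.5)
The plan is to check directly that the tuple $G'$ defined in Lemma~\ref{lem:embedding} satisfies every clause of the POSG definition from the Background section. I will then show that the embedding is faithful: from any state, and for any joint policy of the alive agents, the distribution over trajectories and rewards in $G$ equals the one in $G'$. This faithfulness is what Theorem~\ref{thm:ne} needs.

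\textbf{Well-posedness of $G'$.} The player set is $\mathcal{I}=\{1,\dots,N\}$, which is finite and fixed, and the state set $\mathcal{S}$ is unchanged. Each $\mathcal{A}^+_i(s)$ is nonempty and finite, being either $\mathcal{A}_i$ or the singleton $\{\mathrm{N}\}$. So $\mathcal{A}^+(s)$ is a state-dependent finite product, which is exactly the setting Fink's framework permits.

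For $\mathcal{T}^+$, I must check that it is a well-defined kernel on all of $\mathcal{S}\times\mathcal{A}^+(s)$. Projecting a joint augmented action $a\in\mathcal{A}^+(s)$ onto its coordinates in $\mathcal{L}(s)$ gives an element of $\prod_{i\in\mathcal{L}(s)}\mathcal{A}_i$, which is the domain of $\mathcal{T}(\cdot\mid s,\cdot)$. The coordinates outside $\mathcal{L}(s)$ are forced to be $\mathrm{N}$, so the projection is a bijection and nothing is discarded. The same projection makes $\mathcal{R}^+_i$ well defined. It is bounded whenever $\mathcal{R}_i$ is, which I take as implicit under finiteness. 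The observation functions $\mathcal{Z}_i$ and the discount $\gamma$ carry over unchanged.

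\textbf{Equivalence of dynamics.} Given a stationary policy profile $\pi$ for $G$, where $\pi_i(s)$ is defined for $i\in\mathcal{L}(s)$, I extend it to $\pi^+$ by setting $\pi^+_i(s)=\delta_{\mathrm{N}}$ for $i\notin\mathcal{L}(s)$. Conversely, every policy of $G'$ is forced to take this form at unborn coordinates. Under this correspondence, the one-step law of $(s_{t+1},r_t)$ agrees in both games, by the definitions of $\mathcal{T}^+$ and $\mathcal{R}^+$. Induction on $t$ then shows the trajectory laws coincide. Hence $V^i_{G'}(\pi^+;s)=V^i_G(\pi;s)$ for every $i\in\mathcal{L}(s)$, while agents not yet alive accumulate only the rewards they earn after being spawned.

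I expect the main obstacle to be pinning down what a value or deviation means for an agent that is currently unborn. The subtle point is that a deviation by $i$ in $G'$ changes only its behaviour in states where $i\in\mathcal{L}(s)$. Such deviations are therefore in one-to-one correspondence with deviations available in $G$, so the best-response sets match. I will also note that the spawn mechanism, including the population cap, lives entirely inside $\mathcal{T}$ and so needs no special handling. Together these give the $N$-player discounted stochastic game claimed in the lemma.
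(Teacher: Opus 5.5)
Your proposal is correct and takes essentially the same route as the paper. The paper proves the lemma purely by construction: it pads each unborn agent with the singleton action $\mathrm{N}$, lets $\mathcal{T}^+$ and $\mathcal{R}^+_i$ ignore those coordinates, and reads off an $N$-player discounted stochastic game with state-dependent action sets in Fink's sense. Your projection-bijection check and the trajectory-law and deviation correspondence make explicit the well-definedness and faithfulness that the paper leaves implicit and that Theorem~\ref{thm:ne} relies on.
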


\section{Implementation Details}

\subsection{Predator--Prey}

\subsubsection{Network Architecture}
\paragraph{Shared convolutional encoder.}
All methods use a two-layer convolutional encoder with $3\times3$
kernels, stride $1$, and VALID padding, producing $8$ and $16$
channels, respectively. The spatial representation is flattened and
concatenated with non-spatial features before a linear layer of size
$128$ with ReLU activation.

\paragraph{Value-based methods (IQL, VDN).}
The encoder output is passed through a fully connected layer of size
$64$ with ReLU activation, followed by dueling value and advantage
heads. For VDN, joint action-values are obtained by summing
per-agent Q-values.

\paragraph{PPO.}
The encoder output is used as a shared embedding for actor and critic.
Both heads apply a linear layer of size $64$ with ReLU activation.
The actor outputs logits over actions and the critic outputs a scalar value.

\paragraph{MAPPO.}
The actor is identical to PPO. The critic is centralized and receives
stacked observations of all agents as input. It uses two convolutional
layers with $16$ output channels each, followed by fully connected
layers of sizes $128$ and $64$.

\paragraph{MAPPO\_state.}
The actor is identical to PPO. The centralized critic instead operates on a structured global state. In addition to the scalar quantities (number of active agents, prey alive, population cap) and the parent vector, it also receives the full grid representation. The grid is processed by two convolutional layers
with $16$ output channels, followed by fully connected layers of
sizes $128$ and $64$.

\subsubsection{Hyperparameter Optimization}

\begin{table*}[h]
\centering
\small
\begin{tabular}{lcc}
\toprule
 & \textbf{IQL / VDN} & \textbf{PPO / MAPPO / MAPPO\_state} \\
\midrule
Discount ($\gamma$) & 0.9, 0.99, 0.995 & 0.9, 0.99 \\
lr\_init & $1\mathrm{e}{-4}$, $5\mathrm{e}{-4}$, $1\mathrm{e}{-3}$ & $1\mathrm{e}{-4}$, $5\mathrm{e}{-4}$, $1\mathrm{e}{-3}$ \\
$\epsilon_{greedy}$ & 0.1, 0.2, 0.3 & -- \\
Max $\epsilon_{spawn}$ & 0.05, 0.1, 0.15, 0.2 & -- \\
\midrule
GAE $\lambda$ & -- & 0.9, 0.95 \\
Entropy coefficient & -- & 0.01, 0.05, 0.1 \\
Clip $\epsilon$ & -- & 0.05, 0.1, 0.15, 0.2 \\
\bottomrule
\end{tabular}
\caption{Hyperparameter search space for Predator--Prey.}
\end{table*}

\subsubsection{Hyperparameters Used}
\begin{table*}[h]
\centering
\small
\begin{tabular}{lccccc}
\toprule
\textbf{Hyperparameter} & \textbf{IQL} & \textbf{VDN} & \textbf{PPO} & \textbf{MAPPO} & \textbf{MAPPO\_state} \\
\midrule
lr\_init & $1\mathrm{e}{-3}$ & $1\mathrm{e}{-3}$ & $1\mathrm{e}{-4}$ & $1\mathrm{e}{-4}$ & $1\mathrm{e}{-4}$ \\
lr\_min & $1\mathrm{e}{-4}$ & $1\mathrm{e}{-4}$ & -- & -- & -- \\
Discount ($\gamma$) & 0.99 & 0.99 & 0.99 & 0.99 & 0.99 \\
Max grad norm & 1.0 & 1.0 & 0.5 & 0.5 & 0.5 \\
Rollout steps per epoch & 1 & 1 & 20 & 20 & 20 \\
Num environments & 1024 & 1024 & 1024 & 1024 & 1024 \\
Replay ratio & 16 & 16 & -- & -- & -- \\
\midrule
$\epsilon_{greedy}$ & 0.1 & 0.1 & -- & -- & -- \\
Max $\epsilon_{spawn}$ & 0.1 & 0.05 & -- & -- & -- \\
Spawn exploration schedule & linear & linear  & -- & -- & -- \\
\midrule

GAE $\lambda$ & -- & -- & 0.9 & 0.9 & 0.9 \\
Clip $\epsilon$ & -- & -- & 0.2 & 0.2 & 0.2 \\
Value loss coefficient & -- & -- & 0.5 & 0.5 & 0.5 \\
Entropy coefficient & -- & -- & 0.05 & 0.05 & 0.05 \\
Update epochs & -- & -- & 5 & 5 & 5 \\
Minibatches & -- & -- & 20 & 20 & 20 \\
\bottomrule
\end{tabular}
\caption{Selected hyperparameters for Predator--Prey.}
\end{table*}

\subsection{Level Based Foraging}

\subsubsection{Network Architecture}

\paragraph{Value-based methods (IQL, VDN).}
We use a dueling Q-network with LayerNorm and dropout.
The trunk consists of three linear layers with widths
$128 \rightarrow 256 \rightarrow 256$ with ReLU activations, each followed by LayerNorm.
A dropout layer (rate $0.1$) is applied in training.
Separate linear heads produce a scalar value and an advantage vector over actions.

\paragraph{PPO.}
We use an MLP actor--critic with a shared trunk of four linear layers
$64 \rightarrow 128 \rightarrow 256 \rightarrow 128$ with ReLU activations.
The actor is a linear head from the 128-d embedding to action logits (categorical policy),
and the critic is a linear head from the same embedding to a scalar value.
Actor and critic heads use orthogonal initialization (actor gain $0.01$, critic gain $1.0$). 

\paragraph{MAPPO.}
The actor matches PPO.
The critic is centralized: it receives the concatenated observations of all agents
(dimension $\texttt{obs\_dim}\times \texttt{num\_agents}$) and uses the same 4-layer MLP
$64 \rightarrow 128 \rightarrow 256 \rightarrow 128$ with ReLU activations, followed by
a scalar value head (orthogonal init, gain $1.0$) whose output is broadcast to all agents.

\subsubsection{Hyperparameter Optimization}
\begin{table*}[h]
\centering
\small
\begin{tabular}{lcc}
\toprule
 & \textbf{IQL / VDN} & \textbf{PPO / MAPPO} \\
\midrule
Discount ($\gamma$) & 0.9, 0.99, 0.995 & 0.9, 0.99 \\
lr\_init & $1\mathrm{e}{-4}$, $5\mathrm{e}{-4}$, $1\mathrm{e}{-3}$ & $1\mathrm{e}{-4}$, $5\mathrm{e}{-4}$, $1\mathrm{e}{-3}$ \\
$\epsilon_{greedy}$ & 0.1, 0.2, 0.3 & -- \\
Max $\epsilon_{spawn}$ & 0.05, 0.1, 0.15, 0.2 & -- \\
\midrule
GAE $\lambda$ & -- & 0.9, 0.95 \\
Entropy coefficient & -- & 0.01, 0.05, 0.1 \\
Clip $\epsilon$ & -- & 0.05, 0.1, 0.15, 0.2 \\
\bottomrule
\end{tabular}
\caption{Hyperparameter search space for LBF.}
\end{table*}

\subsubsection{Hyperparameters Used}

\begin{table*}[h]
\centering
\small
\begin{tabular}{lcccc}
\toprule
\textbf{Hyperparameter} & \textbf{IQL} & \textbf{VDN} & \textbf{PPO} & \textbf{MAPPO} \\
\midrule
lr\_init & $1\mathrm{e}{-3}$ & $5\mathrm{e}{-4}$ & $1\mathrm{e}{-3}$ & $1\mathrm{e}{-3}$ \\
lr\_min & $1\mathrm{e}{-4}$ & $1\mathrm{e}{-4}$ & -- & -- \\
Discount ($\gamma$) & 0.9 & 0.9 & 0.9 & 0.99 \\
Max grad norm & 1.0 & 1.0 & 0.5 & 0.5 \\
Rollout steps per epoch & 1 & 1 & 20 & 20 \\
Num environments & 10k & 10k & 20k & 20k \\
Batch size & $16384$ & $16384$ & -- & -- \\
\midrule

$\epsilon_{greedy}$ & 0.1 & 0.2  & -- & -- \\
Max $\epsilon_{spawn}$ & 0.1 & 0.1 & -- & -- \\
Spawn exploration schedule & linear & linear  & -- & -- \\
\midrule
GAE $\lambda$ & -- & -- & 0.95 & 0.9 \\
Clip $\epsilon$ & -- & -- & 0.2 & 0.2 \\
Value loss coefficient & -- & -- & 0.5 & 0.5 \\
Entropy coefficient & -- & -- & 0.01 & 0.01 \\
Update epochs & -- & -- & 5 & 5 \\
Minibatches & -- & -- & 5 & 5 \\
\bottomrule
\end{tabular}
\caption{Selected hyperparameters for LBF.}
\end{table*}

\subsection{PuddleBridge }

We use a convolutional dueling Q-network without parameter sharing.  The observation is split into a spatial base grid and a non-spatial remainder. The base grid is an $8\times 8$ map with $7$ channels and is processed by two
$3\times3$ convolutional layers with stride $1$ and VALID padding, producing
$8$ and $16$ channels, respectively, each followed by ReLU activation. The resulting representation is flattened. The non-spatial component consists of the agent’s position, agent ID, and a one-hot encoding of the previous joint action. These features are concatenated with the flattened convolutional representation. The fused representation is passed through two fully connected layers
of sizes $128$ and $64$ with ReLU activations, followed by dueling value and advantage heads.

\begin{table}[h]
\centering
\small
\begin{tabular}{ll}
\toprule
\textbf{Algorithm} & \textbf{VDN } \\
\midrule
lr\_init & $1\mathrm{e}{-3}$ \\
lr\_min & $1\mathrm{e}{-4}$ \\

Discount ($\gamma$) & 0.95 \\
Target update period  & 100 \\
Grad clip norm  & 1.0 \\

Num rollout steps per epoch  & 1 \\

$\epsilon_{greedy}$ & 0.3 \\
Max $\epsilon_{spawn}$ & 0.05 \\
Spawn exploration schedule & linear\\

Num environments  & 1024  \\
Replay ratio  & 16 \\
\bottomrule
\end{tabular}
\caption{Selected hyperparameters for PuddleBridge using VDN without parameter sharing.}
\end{table}

\section{Training Framework}

All experiments were implemented in JAX, with neural networks defined in Flax and optimization performed using the Adam optimizer from Optax. Experience replay for value-based methods was implemented using the Flashbax library.

\end{document}